\renewcommand\footnotetextcopyrightpermission[1]{} %
\crefname{lemma}{Lemma}{Lemmas}
\crefname{cor}{Corollary}{Corollaries}
\crefname{theorem}{Theorem}{Theorems}
\crefname{assumption}{Assumption}{Assumptions}
\numberwithin{equation}{section}
\renewcommand{\epsilon}{\varepsilon}
\newcommand{\LATER}[1]{\error}
\newcommand{\fLATER}[1]{\error}
\newcommand{\TBD}[1]{\error}
\newcommand{\fTBD}[1]{}
\newcommand{\PROBLEM}[1]{\error}
\newcommand{\fPROBLEM}[1]{\error}
\title[]{Exchangeable modelling of relational data: checking sparsity, train-test splitting, and sparse exchangeable Poisson matrix factorization}
\newcommand{\mcl}{\mathcal}
\newcommand{\W}{\mcl{W}}
\newcommand{\defnphrase}[1]{\emph{#1}}
\newcommand{\Reals}{\mathbb{R}}
\newcommand{\NNReals}{\Reals_{+}}
\newcommand{\edges}{e}
\newcommand{\EE}{\mathbb{E}}
\renewcommand{\Pr}{\mathbb{P}}
\newcommand{\given}{\mid}
\newcommand{\poiDist}{\mathrm{Poi}}
\newcommand{\as}{\textrm{ a.s.}}
\newcommand{\intd}{\mathrm{d}}
\newcommand{\dist}{\ \sim\ }
\newcommand{\distiid}{\overset{\mathrm{iid}}{\dist}}
\newcommand{\distind}{\overset{ind}{\dist}}
\newcommand{\PP}{\Pi}
\newcommand{\PPDist}{\mathrm{PP}}
\global\long\def\iid{i.i.d.\ }
\global\long\def\bern{\mathrm{Bern}}
\global\long\def\bernDist{\bern}
\global\long\def\poiDist{\mathrm{Poi}}
\global\long\def\normDist{\mathrm{Normal}}
\global\long\def\gammaDist{\mathrm{Gamma}}
\global\long\def\multiDist{\mathrm{Multi}}
\global\long\def\tPoiDist{\mathrm{tPoi}}
\global\long\def\given{\mid}
\global\long\def\distiid{\overset{iid}{\dist}}
\global\long\def\distind{\overset{ind}{\dist}}
\global\long\def\Reals{\mathbb{R}}
\global\long\def\NNReals{\Reals_{+}}
\global\long\def\as{\textrm{ a.s.}}
\providecommand\given{} 
\newcommand\SetSymbol[1][]{
  \nonscript\,#1:\nonscript\,\mathopen{}\allowbreak}
\DeclarePairedDelimiterX\Set[1]{\lbrace}{\rbrace}%
{ \renewcommand\given{\SetSymbol[]} #1 }
\newcommand{\Ind}{\mathbbm{1}}
\newcommand{\ul}{u}
\newcommand{\Ul}{U}
\newcommand{\uft}{x}
\newcommand{\Uft}{X}
\newcommand{\il}{t}
\newcommand{\Il}{T}
\newcommand{\ift}{y}
\newcommand{\Ift}{Y}
\newcommand{\ufSpace}{\mathcal{X}}
\newcommand{\ufMeasure}{\eta}
\newcommand{\ifSpace}{\mathcal{Y}}
\newcommand{\ifMeasure}{\rho}
\newcommand{\usize}{s}
\newcommand{\isize}{\alpha}
\newcommand{\urm}{\mu}
\newcommand{\irm}{\rho}
\newcommand{\samp}[2][]{\mathsf{Smpl}\mathopen{}\left(#2,#1\right)\mathclose{}}
\begin{document}

\author{Victor Veitch\and Ekansh Sharma \and Zacharie Naulet\and Daniel M. Roy}

\renewcommand{\Ind}{I}

\begin{abstract}
  A variety of machine learning tasks---e.g., matrix factorization, topic
  modelling, and feature allocation---can be viewed as learning the parameters of
  a probability distribution over bipartite graphs.  
  Recently, a new class of models for networks, the sparse exchangeable graphs, have been introduced
  to resolve some important pathologies of traditional approaches to statistical network modelling; 
  most notably, the inability to model sparsity (in the asymptotic sense).
  The present paper explains some practical insights arising from this work. 
  We first show how to check if sparsity is relevant for modelling a given (fixed size) dataset
  by using network subsampling to identify a simple signature of sparsity.
  We discuss the implications of the (sparse) exchangeable subsampling theory
  for test--train dataset splitting; we argue common approaches can
  lead to biased results, and we propose a principled alternative.  
  Finally, we study sparse exchangeable Poisson matrix factorization as a worked example. 
  In particular, we show how to adapt mean field variational inference to the
  sparse exchangeable setting, allowing us to scale inference to huge datasets.
\end{abstract}

\maketitle

\section{Introduction}
\label{sec:introduction}

Consider a dataset consisting of users, items, and links between users and items whenever a user has consumed a particular item.
A common task is to recommend unconsumed items to
users.
Loss functions that directly optimize the quality of the recommendation tend to be intractable,
which motivates learning a recommendation rule by instead minimizing some proxy.
A particularly natural choice is to fit a generative model for the dataset and base
recommendations on this model, e.g., by recommending items that have a high posterior probability of being ranked highly by the user.
Intuitively, the quality of the recommendation depends on the quality of the model.
The vast majority of relational data modelling approaches used in practice
fall under the remit of the \emph{dense exchangeable} (or \emph{dense graphon}) framework \cite{Aldous:1981,Hoover:1979,Orbanz:Roy:2015};
models in this class are intuitive and easy to work with, but have a limited
capacity to capture the structure of real-world data. 
In particular, they are misspecified as models for sparse data \cite{Orbanz:Roy:2015}.
Recently, the dense exchangeable framework has been generalized to accommodate a much larger 
range of phenomena. This new framework is known as the 
\emph{sparse exchangeable} (or \emph{graphex}) 
framework \cite{Caron:2012,Caron:Fox:2017,Herlau:Schmidt:Morup:2016,Veitch:Roy:2015,Borgs:Chayes:Cohn:Holden:2016,Veitch:Roy:2016,Janson:2016,Janson:2017,Todeschini:Caron:2016,Borgs:Chayes:Cohn:Veitch:2017,Caron:Rousseau:2017}.
By viewing relational datasets as bipartite graphs we are able to, in principle, 
apply the sparse exchangeable framework to generalize 
many existing approaches in machine learning and improve the quality of the underlying model.

This idea is widely applicable. 
For example, in topic modelling, we take one part of the bipartite graph to be words, the other part to be documents,
and the edges indicate whether each word is included in each document.
In feature allocation, one part of the graph is people, the other part is features, and the edges
indicate whether each feature is possessed by each person.
In recommendation,  
one part is users, the other part is items,
and each edge $(i,j)$ indicates that user $i$ has consumed item $j$.
For the sake of concreteness, we will use this user-item analogy throughout the remainder of the paper.

We call a sequence of growing bipartite graphs \defnphrase{dense} if a constant
fraction of all edges are present as either (or both) the number of users or
items increases, and we call a sequence \defnphrase{sparse} (or not dense) if it
is not dense.%

Folk wisdom holds that real-world relational datasets are sparse, and thus that the 
traditional modelling approach is unsatisfactory. However, sparsity is a
property of a sequence of observations, and in many data analysis situations we
just have a single observation at some particular size.  This begs the question:
does sparsity have practical relevance in this case?
To answer this question, we develop a simple
  method for directly checking whether sparsity is relevant for a given
  dataset. The key idea is to use the subsampling theory for sparse exchangeable
  graphs \cite{Veitch:Roy:2016, Borgs:Chayes:Cohn:Veitch:2017} to extract a
  sequence of shrinking subgraphs from the dataset; the sparsity of this
  sequence acts as a signature for whether sparsity needs to be accounted for in
  the modelling of the dataset.  We use this method to show that sparsity
  is indeed common in real-world datasets.

  Next, we treat the question of how a dataset should be split into testing and training sets. 
  This is a subtle question in the relational data setting, and often ad hoc methods are used.
  In the probabilistic approach a dataset $G_s$ is modeled as a realization of size $s$ from
  a probabilistic model with parameters $\Theta$, i.e., 
  \[
  G_s \given \Theta \dist \Pr(G_s \in \cdot \given \Theta).
  \]
  The key observation is that the test and train sets should be distributed as size $r$ and size $s-r$ samples
  from a distribution with common parameter $\Theta$; that is, the test and train sets should (marginally)
  come from the same distribution as the dataset. If this fails, then a model estimated using the training data
  will be biased for the test and full datasets.
  Accordingly, we must split the data using the unique subsampling scheme that has this property for the
  sparse exchangeable models.
  This leads to some complications that do not exist with more naive biased methods (e.g., holding out edges uniformly at random);
  we discuss how to deal with these in practice.
  Our test--train split scheme is mandated when using an exchangeable generative model (which covers most generative model cases),
  but is also likely more broadly useful because it gives a new approach assessing relational data models, 
  complementing existing approaches.

Finally, we treat sparse exchangeable Poisson matrix factorization as an extended worked example.
The generative model we use is essentially the bipartite version of the model of \cite{Todeschini:Caron:2016}.
The main novelty is a mean field variational inference scheme to scale inference to huge datasets.
Roughly speaking, our inference scheme adapts the Poisson matrix factorization approach of \cite{Gopalan:Hofman:Blei:2015},
with some new techniques to account for the additional complication introduced by the sparse structure.
Roughly, this generalizes Poisson matrix factorization to the sparse graph regime, although there are some
small differences between the dense version of our model and the approach of \cite{Gopalan:Hofman:Blei:2015}.
Our hope is that the techniques we introduce can serve as a general blueprint for scalable variational inference
of sparse exchangeable models. 
 
In practice, we implement inference using TensorFlow \cite{tensorflow2015-whitepaper};
this allows for GPU computation, resulting in very fast inference.\footnote{Code available at \href{https://github.com/ekanshs/graphex-nnmf}{github.com/ekanshs/graphex-nnmf}}
We find that accounting for sparsity carries negligible additional computational cost,
which is in contrast to the MCMC approach of \cite{Caron:Fox:2017,Herlau:Schmidt:Morup:2016,Todeschini:Caron:2016}.  
Interestingly, we find that accounting for sparsity has almost no effect on recommendation
performance, even for sparse datasets.
However, we do see an improvement in how well the model captures the structure of the data, e.g., the degree structure of the graph.

We define the sparse exchangeable models in \cref{sec:model}; 
this is a straightforward translation of the unipartite case.
Model checking is treated in \cref{sec:checking-sparsity},
and data splitting in \cref{sec:test-train-split}.
Finally, we cover sparse exchangeable Poisson matrix factorization in \cref{sec:gnnmf}.

\section{Bipartite Sparse Exchangeable Graphs}
\label{sec:model}

A bipartite graph $g$ is a triple $g = (V_U,V_I,E)$ where $V_U$, $V_I$
are sets of vertices and $E \subseteq V_U \times V_I$ is a set of (possibly weighted) edges.
We denote the number of users as $|V_U| = U(G)$, the number of items as $|V_I| = I(G)$, and,
abusing notation, the number of edges as $|E| = E(G)$.

\subsection*{The model}

The model for bipartite graphs we consider in this paper is the natural
extension of the sparse exchangeable unipartite graphs of
\cite{Caron:Fox:2017,Veitch:Roy:2015,Borgs:Chayes:Cohn:Holden:2016}, also
considered in \cite{Caron:2012,Caron:Fox:2017}. %
The basic structure is that each user $\Ul_i$ is assigned some latent features $\Uft_i$, each item $\Il_j$ 
is assigned some latent features $\Ift_j$, and, given the latent features,
and each edge $g_{ij}$ is drawn independently from some distribution parameterized by $W(\Uft_i, \Ift_j)$
for some function $W$ (the \emph{graphon}). 
For example, in rank $K$ matrix factorization the features are $K$-dimensional vectors
and typically $g_{ij} \given \Uft_i, \Ift_j \dist \normDist(W(\Uft_i, \Ift_j), 1)$ with $W$ the inner product function.
For concreteness, we will default to the case $g_{ij}\in \{0,1\}$---understood as an indicator for edge inclusion---and $W(\Uft_i, \Ift_j) = \Pr(g_{ij} = 1 \given \Uft_i, \Ift_j)$ 
for the rest of the paper, but all of our discussion applies to the more general setting.

The generative structure we have described thus far is common to both the dense exchangeable and sparse exchangeable models.
The distinction lies in how the latent features are generated.
In dense exchangeable models, these are drawn \iid from some probability space. 
This is a seemingly natural choice, but it is the root cause of the pathological denseness of the models.
In the sparse exchangeable model,
the user feature space, $(\ufSpace,\ufMeasure)$, and item feature space, $(\ifSpace,\ifMeasure)$, 
are taken to be infinite measure spaces.
The users are then drawn as a Poisson process $\PP_U$ with mean measure $\ufMeasure(\intd \Uft) \intd \Ul$ on $\ufSpace \times \NNReals$,
where the $\NNReals$ coordinates are interpreted as labels of the users; see \cref{fig:expl_diagram}.
The items $\PP_I$ are drawn in the analogous way. 
Edges are then randomly generated between users and items according to
\[
e_{ij} \given \PP_U,\PP_I \distind \bernDist(W(\Uft_i, \Ift_j)).
\]

This samples in an infinite bipartite graph. To restrict to finite size,
we include only users with labels $\Ul_i < \usize$ and items with labels $\Il_j < \isize$;
see \cref{fig:expl_diagram}. We refer to $\usize$ as the user-size of the graph,
and $\isize$ as the item-size; these naturally correspond to the sample size of a dataset.
This restriction results in an induced subgraph with a finite number of edges,
but with an infinite number of items and users. 
This is resolved by excluding any point of the Poisson processes that fails to connect to any edge;
that is, this model excludes users and items with degree zero. 

Summarizing, sparse exchangeable graph distributions are naturally parameterized by $\ufMeasure, \ifMeasure, W$,
and the notion of sample size is given by $\usize$ and $\isize$.
The analogous statement for the (more familiar) dense case is that dense exchangeable graph distributions are
naturally parameterized by $P, Q, W$---where the user features are drawn \iid according to distribution $P$ 
and the item features according to $Q$---and the notion of sample size is the number of users and items in the graph.

A few remarks are in order:

We defined the sparse exchangeable models on infinite measure spaces. The same definition works on finite measure spaces,
in which case the models are the dense exchangeable (\iid features) models, up to some minor technical differences.

We emphasize again that the distinction between the dense and sparse models is simply how the latent features are generated;
for this reason, it is relatively easy to postulate sparse analogues of machine learning models that are already used in practice.

The models used here may seem somewhat arbitrary. 
This is not so; these models are the natural extension of the dense exchangeable theory,
and are derived from simple and natural postulates. See \cite{Veitch:Roy:2015, Borgs:Chayes:Cohn:Holden:2016}
for a derivation from exchangeability, and \cite{Borgs:Chayes:Cohn:Veitch:2017}
for a derivation from network subsampling invariance.

\begin{figure}[!htb]
  \centering
  \includegraphics[width=0.9\linewidth]{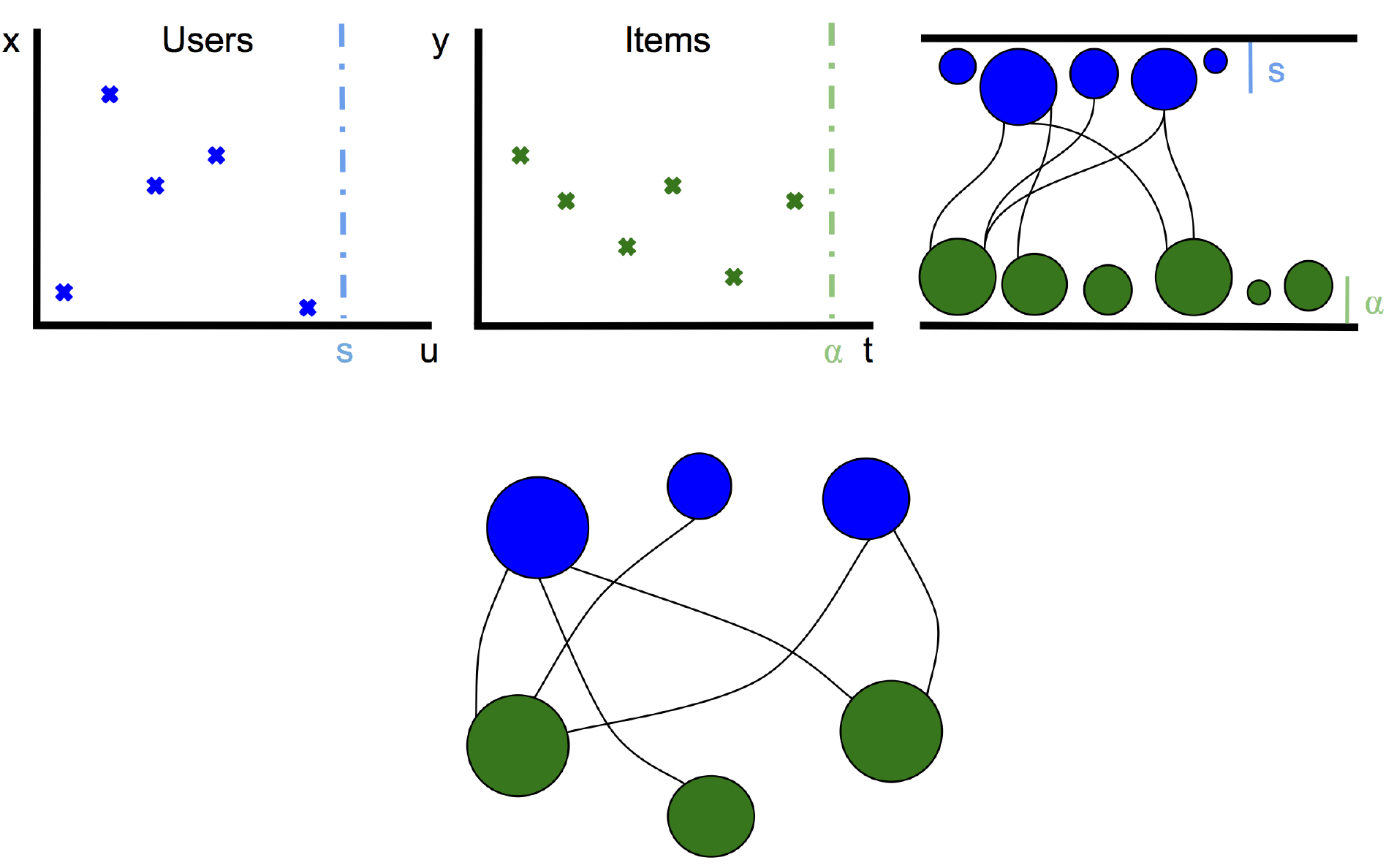}
  \caption{Typical realization of a graphex process. The left panel represents
    a sample from a Poisson process on $\NNReals^2$ restricted to
    $[0,\usize]\times \NNReals$. The x-axis corresponds to users labels, while
    the y-axis corresponds to latent features. The middle panel shows a typical
    sample of the item Poisson process. The right panel displays a sample of
    edges between users and items, sampled independently according to
    $W(\uft_i,\ift_j)$.  The bottom panel displays the sampled graph, given by
    deleting the atoms of each Poisson process that do not connect to any
    edges.}
  \label{fig:expl_diagram}
\end{figure}

\subsection*{Subsampling}
\label{sec:sampling}
We make extensive use of the following scheme for sampling random subgraphs from a graph \cite{Veitch:Roy:2016}:
\begin{definition}
  Let $p,q \in [0,1]$. A \defnphrase{$(p,q)$-sampling} $\samp[(p,q)]{g}$ of a
  bipartite graph $g$ is a random subgraph of $g$ given by including each user
  of $g$ independently with probability $p$ and each item of $g$ independently
  with probability $q$, and returning the induced subgraph with the isolated
  vertices removed.
\end{definition}
This is the sampling scheme associated with sparse exchangeable graphs: If
$G_{\usize,\isize}$ is generated according to $\ufMeasure, \ifMeasure, \W$ then
the subgraph $\samp[(p,q)]{G_{\usize,\isize}}$ is equal in distribution to
$G_{p\usize,q\isize}$ generated according to $\ufMeasure, \ifMeasure, \W$
\cite{Veitch:Roy:2016}.  That is, this sampling scheme defines the relationship
between the generated graphs at different sizes. In fact, this is a defining property of the (sparse) exchangeable graphs, 
and (sparse) exchangeability can be understood as equivalent to this invariance.\cite{Borgs:Chayes:Cohn:Veitch:2017}

\section{Checking Sparsity}
\label{sec:checking-sparsity}
Sparsity is a property of a sequence of graphs, but typically analysis is performed on some particular, fixed size, dataset.
Accordingly, it may seem that the ability to model sparsity is not relevant for most applications in practice. 
We show that this intuition is incorrect: sparsely generated datasets have a readily identifiable signature.
Accordingly, we can assess ahead of time whether sparse structure is present in our data, and thus
whether we should incorporate sparsity into the model.  

Define the edge density of a bipartite graph $g$ to be $\rho(g) = \edges(g) / (U(g) I(g))$.
In the dense case, $\rho(G_{\usize,\isize})$ is constant with respect to $\usize, \isize$, but in the sparse
case it decreases as either (or both) $\usize$ or $\isize$ increases. 
Thus, if we could observe the graph process at different values of $\usize,\isize$, 
we could observe sparsity as a change in the value of the edge density. 
The key insight is that $(p,q)$-sampling allows us to effectively simulate this;
intuitively, this is because, marginally, a $(p,q)$-sampling is a sample of the
graph at user size $p\usize$ and item size $q\isize$.

The content of the following theorem is that this intuition carries through even conditional on the observed data.
\begin{theorem}
If $(G_{\usize,\isize})$ is dense then, for $p,q > 0$,
\[
\lim_{\usize,\isize\to\infty}\frac {\rho(\samp[(p,q)]{G_{\usize,\isize}})}  {\rho(G_{\usize,\isize})} = 1 \as 
\]
\end{theorem}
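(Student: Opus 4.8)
The plan is to prove that, as $\usize,\isize\to\infty$, both $\rho(\samp[(p,q)]{G_{\usize,\isize}})$ and $\rho(G_{\usize,\isize})$ converge almost surely to one and the same strictly positive constant, so that the ratio tends to $1$. The structural fact I would lean on is the subsampling identity recalled in \cref{sec:sampling}: if one realises the $(p,q)$-sampling by independently $p$-thinning the user Poisson process $\PP_U$ and $q$-thinning the item Poisson process $\PP_I$, keeping the original edge indicators on the surviving user--item pairs, then the thinned object is again a graphex process, now driven by the feature measures $p\,\ufMeasure$ and $q\,\ifMeasure$ and the same graphon $W$; this holds jointly over $(\usize,\isize)$, so the whole family $(\samp[(p,q)]{G_{\usize,\isize}})_{\usize,\isize}$ has, under the natural coupling, the law of such a process. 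It therefore suffices to (i) establish a law of large numbers for the edge density of a graphex process with \emph{finite} feature measures, and then (ii) apply it both to $(\ufMeasure,\ifMeasure)$ and to $(p\,\ufMeasure,q\,\ifMeasure)$ and check that the factors $p$ and $q$ cancel.

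For (i), write $\rho(G_{\usize,\isize}) = \frac{\edges(G_{\usize,\isize})}{U(G_{\usize,\isize})\,I(G_{\usize,\isize})}$ and handle the three quantities separately. Since $\edges$, $U$ and $I$ are each nondecreasing in $(\usize,\isize)$ under the natural coupling, a standard monotonicity-and-subsequence argument reduces the joint limit to limits along a geometric subsequence of sizes, where a second-moment bound and Borel--Cantelli give almost sure convergence. The edge count is routine: $\EE[\edges(G_{\usize,\isize})] = \usize\isize\iint W(x,y)\,\ufMeasure(\intd x)\,\ifMeasure(\intd y)$ with matching concentration, so $\edges(G_{\usize,\isize})/(\usize\isize)$ converges a.s.\ to $\iint W(x,y)\,\ufMeasure(\intd x)\,\ifMeasure(\intd y)$. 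The vertex counts are the delicate point, since $U$ and $I$ count only \emph{non-isolated} vertices. I would show $U(G_{\usize,\isize})/\usize\to\ufMeasure_+$ and $I(G_{\usize,\isize})/\isize\to\ifMeasure_+$ a.s., where $\ufMeasure_+$ is the $\ufMeasure$-mass of $\{x\in\ufSpace:\int_{\ifSpace} W(x,y)\,\ifMeasure(\intd y)>0\}$ and $\ifMeasure_+$ is defined by exchanging the roles of users and items: every user $x$ with $\int W(x,y)\,\ifMeasure(\intd y)>0$ acquires an edge once enough items have appeared and so is eventually counted, while by finiteness of $\ufMeasure$ the mass of users with $0<\int W(x,y)\,\ifMeasure(\intd y)<\delta$ tends to $0$ as $\delta\downarrow0$, so the users that are slow to connect are asymptotically negligible; a careful interchange of the $\delta\downarrow0$ and $\usize,\isize\to\infty$ limits gives the two vertex-count limits. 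Dividing,
\[
\rho(G_{\usize,\isize}) \longrightarrow \rho_\infty(\ufMeasure,\ifMeasure,W) \defas \frac{\iint W(x,y)\,\ufMeasure(\intd x)\,\ifMeasure(\intd y)}{\ufMeasure_+\,\ifMeasure_+} \quad\text{a.s.},
\]
which is strictly positive exactly when $\iint W(x,y)\,\ufMeasure(\intd x)\,\ifMeasure(\intd y)>0$; in the degenerate case $W=0$ $\ufMeasure\otimes\ifMeasure$-a.e.\ the graph is a.s.\ empty and the statement is vacuous.

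For (ii), the limit from (i) applied to the original process gives $\rho(G_{\usize,\isize})\to\rho_\infty(\ufMeasure,\ifMeasure,W)$ a.s., and applied to the thinned process (which, as observed above, is a graphex process with feature measures $p\,\ufMeasure$ and $q\,\ifMeasure$) gives $\rho(\samp[(p,q)]{G_{\usize,\isize}})\to\rho_\infty(p\,\ufMeasure,q\,\ifMeasure,W)$ a.s.; both limits hold on the common coupling space, hence simultaneously a.s. It remains to see the two limiting constants coincide. On the one hand $\iint W(x,y)\,(p\,\ufMeasure)(\intd x)\,(q\,\ifMeasure)(\intd y) = pq\iint W(x,y)\,\ufMeasure(\intd x)\,\ifMeasure(\intd y)$; on the other, since $p,q>0$, rescaling $\ufMeasure$ by $p$ (resp.\ $\ifMeasure$ by $q$) does not change which users (resp.\ items) satisfy $\int W(x,y)\,\ifMeasure(\intd y)>0$, so $(p\,\ufMeasure)_+ = p\,\ufMeasure_+$ and $(q\,\ifMeasure)_+ = q\,\ifMeasure_+$. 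Hence $\rho_\infty(p\,\ufMeasure,q\,\ifMeasure,W) = \rho_\infty(\ufMeasure,\ifMeasure,W)$, and the ratio in the statement tends to $1$ a.s.

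The main obstacle is step (i), and within it the vertex-count law of large numbers rather than the edge count. The ratio in the theorem is insensitive to $p,q$ precisely because the factor $pq$ picked up by the edges is cancelled exactly by the factors $p$ and $q$ picked up by the users and items, and pinning those down forces one to control the deletion of isolated vertices --- the very feature that distinguishes the graphex construction. This is also what makes the statement genuinely rely on the graphex structure rather than merely on the density lower bound: for an arbitrary dense graph sequence whose degree profile is sufficiently skewed, a positive fraction of vertices can be pruned away by the sampling and the ratio need not tend to $1$. If a suitable law of large numbers for graphex-process edge densities is already available in the literature --- for instance through \cite{Caron:Fox:2017,Veitch:Roy:2016} --- then step (i) may be quoted and only the short computation of step (ii) remains.
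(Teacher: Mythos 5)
Your proof is correct and rests on the same structural fact as the paper's---the invariance of the graphex process under $(p,q)$-thinning---but you arrive at the conclusion by a noticeably longer route. The paper's two-line argument reads the convergence $\rho(G_{\usize,\isize})\to c$ a.s.\ directly off its definition of \emph{dense}, and then invokes the identity $\samp[(p,q)]{G_{\usize,\isize}}\equaldist G_{p\usize,q\isize}$ to conclude that the subsampled densities converge to the same $c$. You instead assume the finite-measure (dense graphex) form of the generative model, prove the law of large numbers for the edge count and for the non-isolated vertex counts from scratch, identify the limit explicitly as $\iint W\,\intd\ufMeasure\,\intd\ifMeasure/(\ufMeasure_+\ifMeasure_+)$, and verify by hand that this constant is invariant under $(\ufMeasure,\ifMeasure)\mapsto(p\,\ufMeasure,q\,\ifMeasure)$, because the factor $pq$ gained by the edge count is cancelled by the factors $p$ and $q$ gained by the user and item counts. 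Your version buys three things the paper's sketch glosses over: an explicit formula for $c$; a transparent explanation of \emph{why} the ratio is insensitive to $(p,q)$ (the pruning of isolated vertices is exactly compensated); and an honest almost-sure statement---the marginal distributional identity by itself only upgrades $\rho(G_{p\usize,q\isize})\to c$ a.s.\ to convergence \emph{in probability} of $\rho(\samp[(p,q)]{G_{\usize,\isize}})$, whereas your joint thinning coupling across all $(\usize,\isize)$ makes the a.s.\ claim legitimate. One mismatch to flag: the theorem's hypothesis is the sequence-level property ``dense,'' while your argument starts from finiteness of $\ufMeasure$ and $\ifMeasure$; these coincide for exchangeable models (the setting the theorem is intended for), but strictly speaking you prove the statement under a slightly different, model-level hypothesis, and the degenerate case $W=0$ a.e.\ is correctly set aside.
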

\begin{proof}
  (Sketch) From the definition of dense, $\rho(G_{\usize,\isize}) \to c$ almost
  surely as $\min(\usize,\isize) \to \infty$, for some constant $c$. The result
  follows because the distributional invariance of sparse exchangeable models
  under $(p,q)$-sampling \cite{Veitch:Roy:2016,Borgs:Chayes:Cohn:Veitch:2017}
  guarantees that $\rho(\samp[(p,q)]{G_{\usize,\isize}}) \rightarrow c$, with
  the same the same limit $c$.
\end{proof}

Accordingly, we can check if a graph is sparsely generated by plotting 
the edge density of subsampled graphs against the $(p,q)$-sampling levels;
see \cref{fig:sparsity-test}.  
This is theoretically sound if the model is generated according to an exchangeable model,
which is a common assumption in generative modelling of relational data.
Otherwise, this strategy simply provides a powerful heuristic for assessing sparsity. 

\begin{figure}[!t]
  \centering
  \subfloat[Dense users, Dense items]{
  \includegraphics[width=0.5\linewidth]{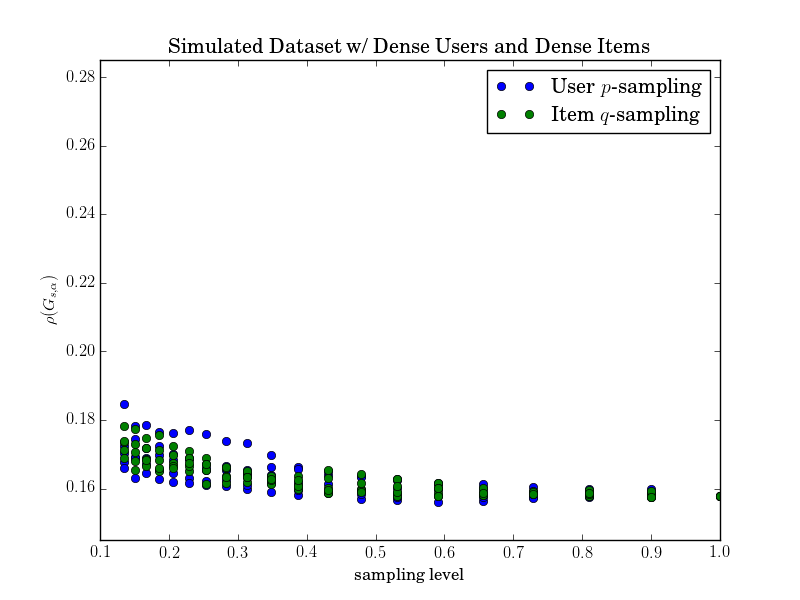}
  \label{fig:sparsity-test-sim-ss}
  } 
  \subfloat[Dense users, Sparse items]{
  \includegraphics[width=0.5\linewidth]{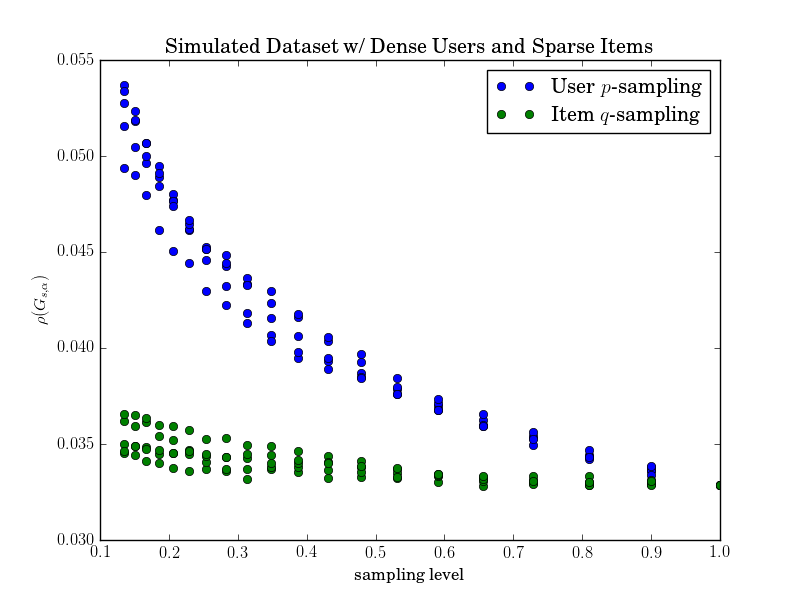}
  \label{fig:sparsity-test-sim-ds}
  } 
    
  \subfloat[Real-World Datasets]{
  \includegraphics[width=0.5\linewidth]{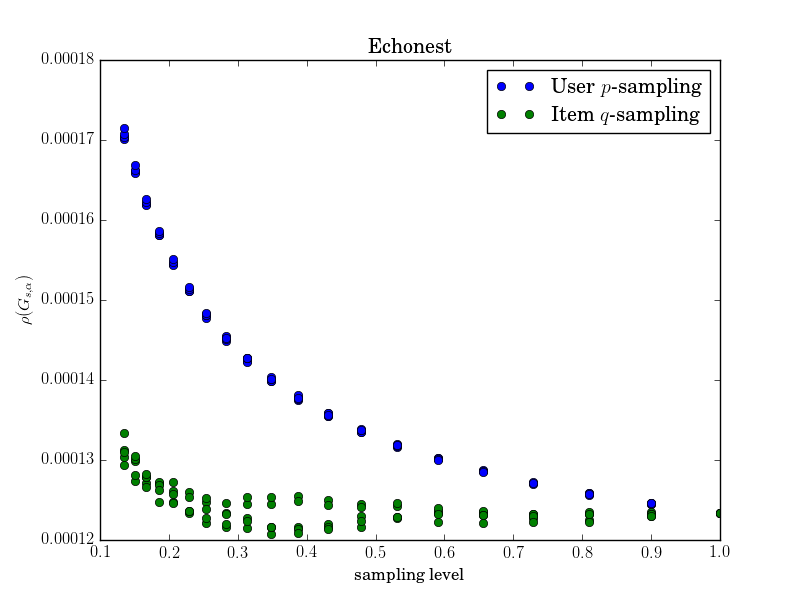}
  \includegraphics[width=0.5\linewidth]{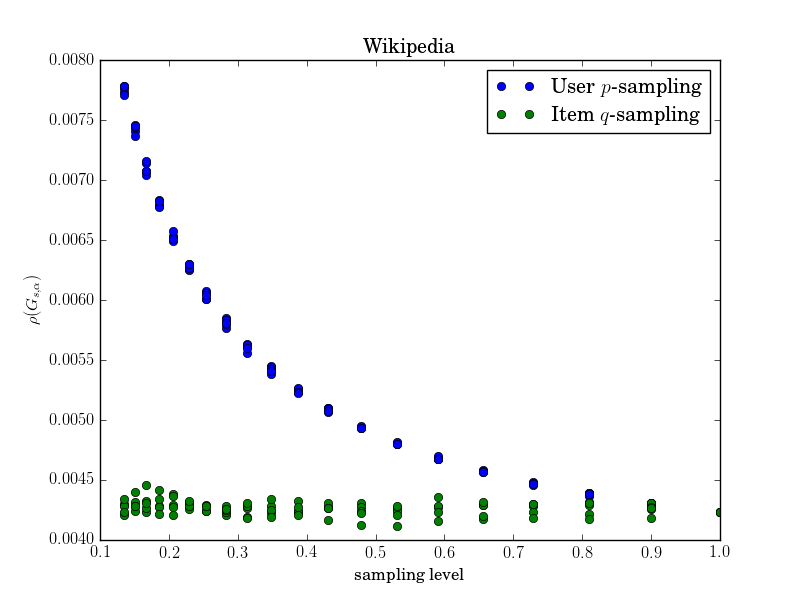}
  \label{fig:sparsity-test-rwd}
  }
  \caption{Checking sparsity:
    Each plot shows the edge density of randomly sampled subgraphs plotted against the sampling level.
    More precisely, for each graph $g$ we produce random subgraphs $\samp[(p,1)]{g}$ by $p$-sampling users,
    and we plot the edge density of these graphs against $p$. We follow the same procedure for $q$-sampling of items.
    If the edge density is approximately constant with $p$ (respectively $q$) then we consider the graph to be dense.
    (a) the graph is a simulated dataset where user nodes and items 
    nodes are generated from finite activation Poisson process---users and items are dense. 
    (b) the graph is a simulated dataset where user nodes are generated from a finite activation process but item nodes are generated from infinite activation Poisson process---intuitively, items are sparse and users are dense. 
    (c) Graph densities for the Wikipedia and Echonest datasets described in \cref{datsets}. 
    The plots show that item nodes should be modelled with an infinite activation process; that is, sparsity is present in this data.}
  \label{fig:sparsity-test}
\end{figure}

\section{Test--Train Split}
\label{sec:test-train-split}
Model evaluation is a key component of data analysis. 
Often, this involves randomly splitting the available data into a test set and a training set.
There are many seemingly natural ways to partition a bipartite graph,
so some care is required in splitting the data.
The choice of partitioning scheme may induce a significant sampling bias
in the test and training sets,
impeding evaluation and complicating model comparison.
In this section we give an approach motivated by the sampling theory of sparse exchangeable graphs.
This approach is the 'right' one for exchangeable models (including dense ones), 
and is also useful as complement to existing ad hoc approaches for evaluating non-generative models.

To see the difficulty with test--train splitting in the relational data setting,
consider a common evaluation procedure for recommender systems:
A test set is produced by holding out reviews independently at random.
For each user with at least one heldout ranked item, the trained model is asked to recommend items by
ranking the set of all films the user has not rated in the training data (i.e., the non-ratings and the heldout data).
Then performance is scored by how highly the algorithm ranks the heldout data.
Notice that this test--train split induces a degree biased sampling of the users and items; that is,
we expect that a typical user in the test set consumes more items
than a typical user in the training set, and that a typical item
in the test set is more popular than a typical item in the training set.
This means that evaluation procedures based on such a split 
tend to focus on the task of recommending popular items to 
popular users---this is often not a good proxy for the true task of interest.
It is easy to modify the scheme to address this particular problem, 
but in general it remains a concern that any test--train scheme may induce some more subtle sampling bias.

In the generative model setting, we must choose the splitting procedure such that 
the parameters of the generative model can be consistently estimated from the training set.
For (sparse) exchangeable models, this means partitioning via $(p,q)$-sampling.
Concretely, we propose the following data splitting scheme; see \cref{fig:tt-split}.
\begin{enumerate}
\item Sample $G_\mathrm{train} = \samp[(p,1)]{G}$, and take $G_\mathrm{holdout}$ to be the complement of $G_\mathrm{train}$ in $G$
\item Sample $G_\mathrm{test} = \samp[(1,q)]{G_\mathrm{holdout}}$ and take $G_\mathrm{holdoutfit}$ to be the complement of $G_\mathrm{test}$ in $G_\mathrm{holdout}$
\end{enumerate}
The point of the scheme is that if $G$ is drawn as a $\isize$ item-size, $\usize$ user-size
sparse exchangeable graph generated according to $\ufMeasure, \ifMeasure, \W$,
then $G_\mathrm{train}$ and $G_\mathrm{test}$ are distributed as, respectively,
size $p\usize, \isize$ and size $p\usize,q\isize$ graphs generated according to
$\ufMeasure, \ifMeasure, \W$.  If we had used some other sampling scheme then,
generally, the test and train sets would be distributed according to different
distributions, and a model that performed well on the training set would not be
expected to perform well on the test set or on fresh data.
\begin{figure}[t]
  \centering
  \includegraphics[width=0.9\linewidth]{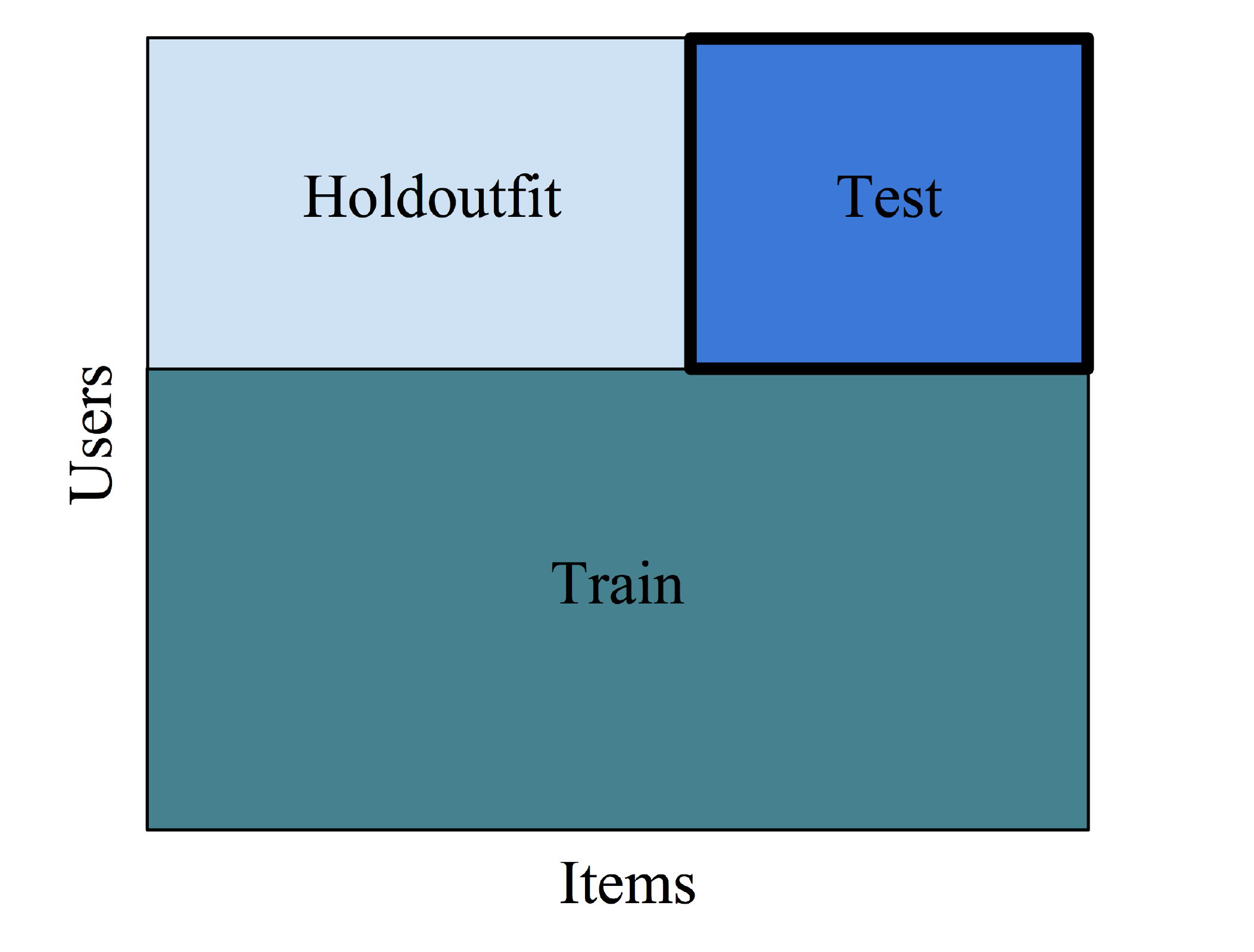}
  \caption{Illustration of the outcome of a test--train split procedure. The dataset is first
    split into a train set and a holdout set by subsampling the users part of
    the graph. Then, the holdout set is in turn split into a test set and a
    holdoutfit set by subsampling the items part of the graph.
    We use $(p,q)$-sampling, which means that any rows or columns that are empty (that is, all $0$)
    in any of the partition sets are removed from that set.
}
  \label{fig:tt-split}
\end{figure}

For concreteness, we envision an estimation procedure as an algorithm that takes an observed bipartite graph $G$ and
outputs estimates $\hat{\ufMeasure}, \hat{\ifMeasure}, \hat{W}$ for the model parameters, and $\hat{\uft}_i$ for the latent feature of each user, 
and $\hat{\ift_j}$ for the latent feature of each item. 
The output of the estimation procedure on $G_\mathrm{train}$ includes estimates for $\hat{W}$
and for the latent features of the items in $G_\mathrm{test}$.
However, it does not include estimates for the latent features of the users in the test set.
Indeed, $G_\mathrm{train}$ does not carry information about these values.
This motivates splitting the holdout set: the validation procedure
should use $\hat{W}$, $\hat{\ufMeasure}$, and $G_\mathrm{holdoutfit}$ to produce an estimate for the 
latent user features of the test set.
This last step is possible for exchangeable random graph models because of the conditional independence structure:
to estimate the latent feature of any user, it is sufficient to know $W$, $\ufMeasure$, and the latent features of its neighbours.

A concrete example given in \cref{VI_data_split}.
\cref{fig:ppc} shows degree distribution samples from the approximate posterior distribution over the test sets
from a model trained using the training sets, under our proposed test--train splittings scheme.
This figure also plots the true degree distribution from the test--data.
Note that the model fit to the training data is able to accurately predict the structure of the test data.

\section{Sparse Poisson Matrix Factorization}\label{sec:gnnmf}
We now turn to non-negative matrix factorization as an extended example.
We adapt the model of \cite{Todeschini:Caron:2016},
using the bipartite variant as a component of probabilistic matrix factorization.
We emphasize that the model we present here is chosen for its simplicity, 
with the aim of exposing sparsity considerations as clearly as possible.
The aim is not to write down the best possible model for the recommendation task we consider,
and indeed there are several obvious extensions that we omit because 
they are essentially orthogonal to the study of sparsity. 

\subsection{The generative model}
\label{sec:generative-model}
Let $\PPDist(m)$ denote the distribution of a Poisson process with mean measure $m$,
and $\gammaDist(a,b)$ denote a gamma distribution with shape $a$ and rate $b$.

\subsubsection{Generalized Gamma Process}
We follow the approach of \cite{Caron:2012} and its successors
\cite{Caron:Fox:2017, Herlau:Schmidt:Morup:2016, Todeschini:Caron:2016} based on
the \textit{Generalized Gamma Process} (GGP),
a Poisson process with mean measure
\[
g_{\sigma,\tau}(\intd u) = \frac{1}{\Gamma(1-\sigma)} u^{-(1+\sigma)}\exp(-\tau
u)\,\intd u,
\]
for $(\sigma,\tau) \in (0,1)\times (0,\infty)\cup (-\infty,0]\times [0,\infty)$.\footnote{Related
Bayesian non-parametric tools have previously been used in matrix
factorization to allow for infinite latent factor dimension, e.g., \cite{Hoffman:Blei:Cook:2010};
we emphasize that this is substantively unrelated to what we are doing here.}

For our purposes, there are two important facts about the GGP.
First, points of the GGP are equivalent to \iid draws from a Gamma distribution if and only if $\sigma < 0$.
In the model defined below this means the model is dense if and only if $\sigma_U,\sigma_I < 0$ \cite{Caron:Fox:2017}. 
This property allows for easy, interpretable comparison between sparse and dense variants of the model.%

Second, the GGP admits a pseudo-conjugacy relationship with the Poisson
distribution. Informally, if
$\xi = \Set{(\theta_1, \omega_1),(\theta_2,\omega_2),\dots}$
is a GGP with parameters $\sigma,\tau$
and
$N_i \given \xi \dist \poiDist(\lambda \omega_i)$ then $\xi \given N$ is equal
in distribution to a point process with two independent components: One component consists
of all atoms for which $N_i = 0$, this is distributed as another GGP with parameters $\sigma$ and $\tau + \lambda$. The other component
consists of the atoms for which $N_i > 0$; in this case the posterior masses are
distributed as $\gammaDist(N_i-\sigma,\lambda+\tau)$, independent of each other
and of the first part of the process. Intuitively speaking, this is the
posterior distribution $\Pr(\omega_i \in \cdot \given N_i)$ of the observed weights.  In the
case that $\sigma < 0$, this is the same conjugate posterior update that we
would arrive at by taking $\omega_i \distiid \gammaDist(-\sigma,\tau)$; hence
the pseudo-conjugacy terminology. It is this property that makes the GGP amenable to
efficient inference.

\subsubsection{Sparse Poisson Matrix Model}
\label{sec:generative-model-1}

The model assigns each item and user a $K$ dimensional latent feature.  
The basic structure is:
\begin{enumerate}
  \item Each user $i$ is assigned a total popularity $P_i$ and affinities
  $(\phi_{ik})_{k=1}^K$ to each feature dimension $k$.
  \item Each item $j$ is assigned a total popularity $P'_j$ and affinities
  $(\varphi_{jk})_{k=1}^K$ to each feature dimension $k$.
  \item Given the features, each edge $(i,j)$ is included independently with probability
  $1-\exp(P_iP'_j\sum_k\phi_{ik}\varphi_{jk}).$
\end{enumerate}
The inclusion probability is the probability of a non-zero draw from a Poisson
distribution with mean $P_iP'_j\sum_k\phi_{ik}\varphi_{jk}$; this is done to
allow us to exploit the GGP-Poisson pseudo-conjugacy for inference.  

We present the generative model in a somewhat different form to allow for easy
derivation of the inferential updates.
The user parameters are:
\begin{gather*}
  \xi = \Set{(\gamma_1,\ul_1),(\gamma_2,\ul_2),\dots} \dist
  \PPDist(g_{\sigma_U,\tau_U} \times \lambda),\\
  \theta_{ik} \distiid \gammaDist(a,b),
\end{gather*}
The popularity of user atom $i$ is $P_i = \gamma_i \sum_k \theta_{ik}$, and the
affiliations are $\phi_{ik} = \theta_{ik} / \sum_k\theta_{ik}$. The item
parameters are:
\begin{gather*}
  \zeta = \Set{(\omega_1,\il_1),(\omega_2,\il_2),\dots} \dist
  \PPDist(g_{\sigma_I,\tau_I} \times \lambda),\\
  \beta_{jk} \distiid \gammaDist(c,d).
\end{gather*}

Let $g_{ij}=1$ indicate that edge $(\ul_i,\il_j)$ is included in the graph (and
$g_{ij}=0$ otherwise).  The generative model for the connections is:
\begin{gather*}
  e_{ij}^k \given \gamma_i, \{\theta_{ik}\}_{k=1..K}, \omega_j, \{\beta_{jk}\}_{k=1..K} \distind \poiDist(\gamma_i \omega_j \theta_{ik} \beta_{jk})\\
  g_{ij} = \Ind[\textstyle\sum_k e_{ij}^k > 0].
\end{gather*}
The use of $K$ latent edge counts per user-item pair should be viewed as an auxiliary variable technique;
this idea is borrowed from \cite{Gopalan:Hofman:Blei:2015}. Taking $g_{ij}=\sum_k e_{ij}^k$ instead results in a model
for integer valued relations; the rest of our discussion holds for this case, subject to obvious minor modifications.

Note that the generative model is for an infinite graph. As usual, we restrict to finite graphs by truncating the label spaces of the users and items,
and then discarding any users or items that are isolated in the induced subgraph; i.e., we restrict to user atoms for which $\ul_i < \usize$
and item atoms such that $\il_j < \isize$.
Recast in the general language used earlier in the paper: 
\begin{gather*}
  W( (\gamma, \boldsymbol{\theta}), (\omega,\boldsymbol{\beta}) ) = 1 -
  \exp\left(- \gamma \omega \textstyle\sum_{k=1}^K \theta_k\beta_k \right),\\
  \ufMeasure(\intd \gamma \intd \boldsymbol{\theta}) =
  g_{\sigma_U,\tau_U}(\intd \gamma) \prod_{k=1}^K
  \frac{\theta_k^{a-1}\exp(-b\theta_k)\intd \theta_k}{b^{-a}\Gamma(a)},\\
  \ifMeasure(\intd \omega\intd \boldsymbol{\beta}) =
  g_{\sigma_I,\tau_I}(\intd \omega) \prod_{k=1}^K
  \frac{\beta_k^{c-1}\exp(-d\beta_k)\intd \beta_k}{d^{-c}\Gamma(c)}.
\end{gather*}

\subsection{Inference}
The learning task has two components: inferring the parameters of $\ufMeasure$ and $\ifMeasure$ (noting that $W$ is fixed),
and inferring the latent feature values of the users and items.

\subsubsection{Global parameters}
\label{sec:hyperparameters}
The sparse model requires us to set hyperparameters that are not relevant in the
dense setting: namely, the GGP parameters $\sigma_U$ and $\sigma_I$, and the
sizes $\usize$ and $\isize$.  We use the general estimation strategy of \cite{Naulet:Sharma:Veitch:Roy:2017}.  This provides a consistent estimator $\hat{\sigma}_U$
for $\sigma_U$ as a function of the degrees of the users in the dataset.
Namely, for a graph $g$ where each user $i$ has degree $d_i$,
\begin{equation*}
  \hat{\sigma}_U =
  \frac{\log(U(g)) - \log(2^{-1} \textstyle\sum_i(1- 2^{-d_i}))}{\log 2}.
\end{equation*}
The estimator for $\sigma_I$ is the obvious analogue. 

It can also be shown that, for some slowly growing function $C_{\isize}$ of $\isize$ (e.g., $C_\isize = O(\log\log\isize)$), 
\[
\log U(G_{\usize,\isize}) - \sigma_U \log e(G_{\usize,\isize}) = C_{\isize} + (1-\sigma_U) \log(s) + o_s(1),
\]
see \cite{Caron:Rousseau:2017} for a derivation of the asymptotics in the (harder) unipartite case; 
the bipartite case is a straightforward adaption.
We can estimate $C_\alpha$, which depends on the model, by treating it as a constant with respect to $\alpha$, 
and estimating this constant by simulating data (with known user size) from the model. 
We then set the user size by subbing $\hat{\sigma}_U$ for $\sigma_U$, throwing away the $o_s(1)$ term, and solving for $\usize$.
We also use the analogous strategy for setting $\isize$.
 
We use an empirical Bayes type procedure, fixing these hyperparameters to their estimated values,
and using a more sophisticated approach to estimating the posterior distribution of the latent features.

\subsubsection{Latent features}

We adopt a mean field variational inference (VI) approach, approximating the true posterior distribution over the
parameters by a fully factorized distribution. See \cite{Blei:Kucukelbir:McAuliffe:2016} for a review of variational inference.
Nominally, main challenge here is that the likelihood has no closed form expression---it can't be readily differentiated, 
or even evaluated---and it is computationally expensive to draw samples from. 
This disallows most general purpose variational inference algorithms.

The solution is that we have parameterized the model such that the complete conditional distribution of each variable
is (approximately) an exponential family distribution---we demonstrate this below, and discuss the required approximations (\cref{complete_conditionals}). 
Complete conditional distributions are the conditional distributions given the data and all other variables. 
The significance of this property is that it allows us to read off a \textit{Coordinate Ascent Variational Inference} scheme (CAVI) automatically \cite{Ghahramani:Beal:2001, Hoffman:Blei:Wang:Paisley:2013}.
It is rather remarkable that this works: CAVI was developed for conjugate Bayesian models with \iid observations, but in the sparse graph case
there is no possible independent prior on the user weights or item weights that would reproduce the model.
Nevertheless, the pseudo-conjugacy of the GGP suffices for efficient inference.

Broadly, the structure of the resulting algorithm is that each variable (e.g., $\theta_{ik}$) gets a parameterized approximating distribution (e.g., $q(\theta_{ik} \in \cdot \given \phi_{ik})$), and the algorithm learns the parameters by iteratively updating them given all other parameters.
Complete conditionals in exponential family form allow us to read off the form of $q$ and the parameter updates.  

\subsubsection{Complete Conditionals}
\label{complete_conditionals}
Most of the results presented here are readily derived 
by ordinary conjugate update manipulations in combination with the generalized gamma process--Poisson pseudo-conjugacy; 
additionally, a detailed treatment of the unipartite case is given in \cite{Todeschini:Caron:2016}.

Let $e_{i\cdot}^k = \sum_j e_{ij}^k$ and $e_{i\cdot} = \sum_k e_{i\cdot}^k$, and 
let $\urm^{\usize}_k = \sum_{i:e_{i\cdot}=0} \gamma_{i}\theta_{ik} 1[\ul_i < \usize]$. 
Intuitively speaking, $\urm^{\usize}_k$ is the total mass of type $k$ belonging to user atoms that
have failed to connect to any items; this turns out to be a sufficient statistic for user atoms that do not connect to any items.
We also need the analogous definitions for the items, writing $\irm^{\isize}_k$ for the total mass of type $k$
belonging to item atoms that have failed to connect to any users. Then,
\begin{align*}
  \theta_{ik}\given \xi,\zeta,\beta,e &\dist \gammaDist(a+
  e_{i\cdot}^{k},b+\gamma_{i}(\textstyle\sum_{\Set{j:e_{\cdot
        j}>0}}\omega_{j}\beta_{jk}+\irm^{\isize}_k)),\\
  \beta_{jk}\given\xi,\zeta,\theta,e & \dist \gammaDist(c+e_{\cdot j}^{k}, d +
  \omega_{j}(\textstyle\sum_{\Set{i:e_{i\cdot}>0}} \gamma_{i}
  \theta_{ik}+\urm^{\usize}_k)).
\end{align*}

The GGP weights for atoms that connect to at least 1 edge:
\begin{align*}
  &\gamma_{i} \given \zeta,\theta,\beta,e\\
  &\qquad \dist \gammaDist( -\sigma_{U}+ e_{i\cdot},
    \tau_{U}+\textstyle\sum_{k}\theta_{ik}(\sum_{\Set{j:e_{\cdot
    j}>0}}\omega_{j}\beta_{jk}+\irm^{\isize}_k)),\\
  &\omega_{j} \given \xi,\theta,\beta,e\\
  &\qquad\dist \gammaDist( -\sigma_{I} + e_{\cdot j}, \tau_{I} +
    \textstyle\sum_{k} \beta_{jk}(\textstyle\sum_{\Set{i:e_{i\cdot}>0}}
    \gamma_{i} \theta_{ik}+\urm^{\usize}_k)).
\end{align*}
Notice that in the case $\sigma_U < 0$, the complete conditional for $\gamma_i$ corresponds to a $\gammaDist(-\sigma_U, \tau_U)$ prior distribution. However, in the case that $\sigma_U > 0$, there is \emph{no} independent prior on $\gamma_i$ that could have given rise to this posterior.  

Next, the auxiliary variables:
\begin{multline*}
  (e_{ij}^{1},\dots,e_{ij}^{K})\given\theta,\beta, \xi, \zeta, [g_{ij}=1]\\
  \dist\tPoiDist(\gamma_{i}\omega_{j}\theta_{i1}\beta_{j1},\dots,\gamma_{i}\omega_{j}\theta_{iK}\beta_{jK}),
\end{multline*}
where $\tPoiDist(\lambda_1,\dots,\lambda_K)$ is the $K$-dimensional truncated Poisson distribution defined by the following scheme. For drawing a sample
$(N_1, \dots, N_K)$: Draw $N \dist \poiDist(\sum_k \lambda_k)$ conditional on
$N>0$, and then draw $(N_1,\dots,N_k) \given N \dist \multiDist(\lambda_1 /
\sum_k \lambda_k, \dots, \lambda_K / \sum_k \lambda_k)$, see \cite{Todeschini:Caron:2016}.

The complete conditional of $(\urm^{\usize}_1, \dots, \urm^{\usize}_K)$ is not available in closed form.
However, using point process techniques, we can derive the 
conditional expectation and variance, see \cref{sec:leftover_cond}.
We find that $\mathrm{var}[\urm^{\usize}_l \given \zeta,\beta,e] \ll \EE[\urm^{\usize}_l \given \zeta,\beta,e]$ in the large data regime. 
This motivates the approximation $\Pr(\urm^{\usize}_k \in \cdot  \given \zeta,\beta,e) \approx \delta_{\EE[\urm^{\usize}_l \given \zeta,\beta,e]}$.
That is, we simply ignore the variability. This approximation works well in practice when used as an input to our inference procedure.

The expectation of the complete conditional is given by:
\begin{equation*}
  \EE[\urm^{\usize}_l \given \zeta,\beta,e] =
  s\EE\left[%
    \frac{\theta^*_l}{\tau_U + \sum_k \theta^*_k \sum_j \omega_j \beta_{jk}}
    \given \zeta,\beta,e
  \right]
\end{equation*}
for $\theta^*_l \distiid \gammaDist(a,b)$; this is easy to approximate with Monte Carlo sampling.

We also use the analogous approximation for the complete conditional of $(\irm^{\usize}_1, \dots, \irm^{\usize}_K)$.

\subsection{Empirical study}
This section covers an empirical comparison of dense and sparse exchangeable Poisson matrix factorization. 
The main takeaways are:
\begin{enumerate}
\item The inference algorithm works well. Our algorithm is able to accurately recover the structure of simulated data, suggesting that the various approximations involved in our inference scheme are valid in practice.
\item The sparse model does a better job recovering the graph structure of sparse data, including real world data.
\item However, despite this, there is no appreciable difference in recommendation performance between the sparse and dense models. 
\end{enumerate}

\subsubsection{Datasets}
\label{datsets}
We consider the following datasets:
\begin{enumerate}
  \item \textbf{Simulated Dataset}: A sample from the model generated according to the following parameters: 
  $\sigma_U=\sigma_I=0.2$, $\tau_U=\tau_I=1.0$,  $a=b=c=d=0.1$, $\usize=\isize=1200.0$. 
  Dataset consists of 9.7M edges with 40,565 users and 40,768 items.
  \item \textbf{Netflix}: Consists of ratings that users have assigned to movies. In our experiment, we treat this as a simple graph by including an edge in the graph whenever user has rated a movie.
  The resulting dataset consists of 480,189 users, 17,770 movies (items), and 100M ratings (edges).
  \item \textbf{Echonest}: The Echonest taste profile dataset \cite{Bertin-Mahieux:Ellis:Whitman:Lamere:2011} is a music dataset with 1,019,318 users, 384,546 songs (items) and 48M entries where each entry is the number of times a user played a song. We treat this as a simple graph by including an edge whenever user played a song. 
  \item \textbf{Wikipedia}: The document-term matrix corresponding to WikiText-103 dataset \cite{Merity:Xiong:Bradbury:Socher:2016},
    after removing stop words, very common words, and stemming the word set.
    The dataset contains 29,425 documents (users), 161,085 terms (items), and 21M tokens (edges).
\end{enumerate}

\subsubsection{Hyperparameters}
For the sparse models, we set the sparsity parameters $\sigma_U$ and $\sigma_I$, and the sizes $\usize$ and $\isize$, 
according to the estimation scheme described above.
For dense models, we set $\sigma_U=\sigma_I=-0.1$ (and $\usize=\isize=0$). 
Empirically, we find that the performance of the dense model is robust to the choice of $\sigma_U$ and $\sigma_I$,
the analogous observation was also made in \cite{Gopalan:Hofman:Blei:2015}.

For all experiments, we set $\tau_I = \tau_I = 1.0$, $a=c=b=d=0.1$ and $K=30$. 
Model performance is somewhat dependent on these parameter choices, 
but our conclusions seem to hold generally. 

\subsubsection{Data splitting and posterior predictive model}
\label{VI_data_split}

We partition each dataset $G$ into $G_\mathrm{train}, G_\mathrm{test}$, and $G_\mathrm{holdoutfit}$ 
according to the scheme describe in \cref{sec:test-train-split}, taking $p=q=0.2$.
We now specialize the discussion of \cref{sec:test-train-split} to the sparse exchangeable Poisson matrix factorization model.

To evaluate the quality of our model, 
we would like to find the posterior predictive distribution
of the test set given $G_\mathrm{train}$ and $G_\mathrm{holdoutfit}$;
i.e., $\Pr(G_\mathrm{test} \in \cdot \given G_\mathrm{train} , G_\mathrm{holdoutfit})$.
We could then, e.g., recommend items to users in the test set by
recommending those items such that $\Pr(e_{ij}=1 \given G_\mathrm{train} , G_\mathrm{holdoutfit})$ is large.

Letting $\Theta = \{\sigma_U, \sigma_I, \zeta,\xi,\theta,\beta\}$ denote the parameters of the model, 
the posterior predictive on the test data has the form:
\begin{align}
\label{post_pred}
\int \Pr(G_\mathrm{test} \in \cdot \given \Theta ) \Pr(\Theta \given G_\mathrm{train} , G_\mathrm{holdoutfit}) \intd \Theta.
\end{align}

Running our inference algorithm on $G_{\mathrm{test}}$ returns a distribution 
$q(\sigma_U, \sigma_I, \xi_{\mathrm{train}}, \zeta, \theta_{\mathrm{train}}, \beta )$
that approximates $\Pr(\Theta \given G_{\mathrm{test}})$---the subscript $\mathrm{train}$ on the user parameters 
denotes restriction to only those users that are included in the training set.  
Intuitively speaking, we would like to approximate $\Pr(\Theta \given G_\mathrm{train} , G_\mathrm{holdoutfit}) \approx q(\Theta)$.

The difficulty is that $q$ does not carry any information about the users in the test set.
To resolve this, we introduce a new approximation:
\begin{multline*}
\Pr(\sigma_U, \sigma_I, \xi_{\mathrm{test}}, \zeta, \theta_{\mathrm{test}}, \beta \given G_{\mathrm{holdoutfit}})\\ \approx
q'(\xi_{\mathrm{test}}, \theta_{\mathrm{test}})q(\sigma_U, \sigma_I, \zeta, \beta). 
\end{multline*}
That is, we approximate the posterior distribution of parameters given $G_{\mathrm{holdoutfit}}$ as a
factorized distribution consisting of the approximate posterior over the sparsity and item parameters (already learned from the training data),
and a new approximating distribution $q'$ over the user parameters.
We take $q'$ in the same family as we would use for mean-field variational inference on $G_{\mathrm{holdoutfit}}$
and fit it by minimizing the variational inference loss between $\Pr(\sigma_U,
\sigma_I, \xi_{\mathrm{test}}, \zeta, \theta_{\mathrm{test}}, \beta \given
G_{\mathrm{holdoutfit}})$ and the distribution $q'(\xi_{\mathrm{test}}, \theta_{\mathrm{test}})q(\sigma_U, \sigma_I, \zeta, \beta)$.
In practice, this is achieved by running our coordinate ascent variational inference scheme using $G_{\mathrm{holdoutfit}}$ as the dataset,
and fixing $q(\sigma_U, \sigma_I, \zeta, \beta)$ to the distribution learned on the training set.

In summary, we take 
\begin{multline*}
\Pr(\sigma_U, \sigma_I, \xi_{\mathrm{test}}, \zeta, \theta_{\mathrm{test}}, \beta \given G_\mathrm{train}, G_{\mathrm{holdoutfit}})\\ \approx 
\delta_{\hat{\sigma}_U, \hat{\sigma}_I} q'(\xi_{\mathrm{test}}) q'(\theta_{\mathrm{test}}) q(\zeta) q(\beta).
\end{multline*}
The posterior predictive distribution for the test set is then computed by substituting this approximation into \cref{post_pred}.

There is one remaining subtlety: we must also set the sample sizes $\usize_{\mathrm{test}}$ and $\isize_{\mathrm{test}}$ for the posterior predictive distribution for the test set. It follows from the structure of the test--train split that $\usize_{\mathrm{test}} = \frac{p}{1-p}\usize_{\mathrm{train}}$
and that $\isize_{\mathrm{test}} = q \isize_{\mathrm{train}}$.

\subsubsection{Posterior Predictive Checks} 
We now assess the quality of the approximate predictive posterior distribution for the test data.
In principle, the performance depends on three separate levels of approximation:
\begin{enumerate}
\item whether an exchangeable model is appropriate (i.e., can the distribution on the test data be estimated in an unbiased way from the training data)
\item whether Poisson matrix factorization a suitable model, and
\item whether the various approximations used in the inference are sound 
\end{enumerate}
We assess the quality by drawing samples from the posterior predictive distribution (over the test set) 
and comparing summary statistics between these samples and the true test data.

\cref{fig:ppc} plots the degree distributions of the posterior samples from dense and sparse models against test datasets. 
\cref{tab:ppc} gives simple summary statistic for the posterior draws. 
The model does a good job predicting the structure of the test data.

For datasets that appear genuinely sparse---e.g., Echonest or Wikipedia---the predictive performance of the sparse models seems better.
The sparse model generates a large number of (low degree) vertices that are present in the actual data, but that are missed by the dense model.
$\hat{\sigma}$ estimated on the sparse model samples are closer to $\hat{\sigma}$ estimates on the actual data.
This can be interpreted as meaning either that the sparse models do a better job of capturing degree heterogeneity---recall $\hat{\sigma}$ is a 
function of the degree distribution---or simply that the sparse models do a better job of predicting sparsity in the data. 
It is an interesting fact that even dense Poisson matrix factorization is able to predict some degree of sparsity in the data,
although it biases towards increased density.
\begin{figure}[!t]
  \centering
  \subfloat[Simulated Dataset: On left is the posterior draw from sparse model and on right is the posterior draw from dense model]{
  \includegraphics[width=0.5\linewidth]{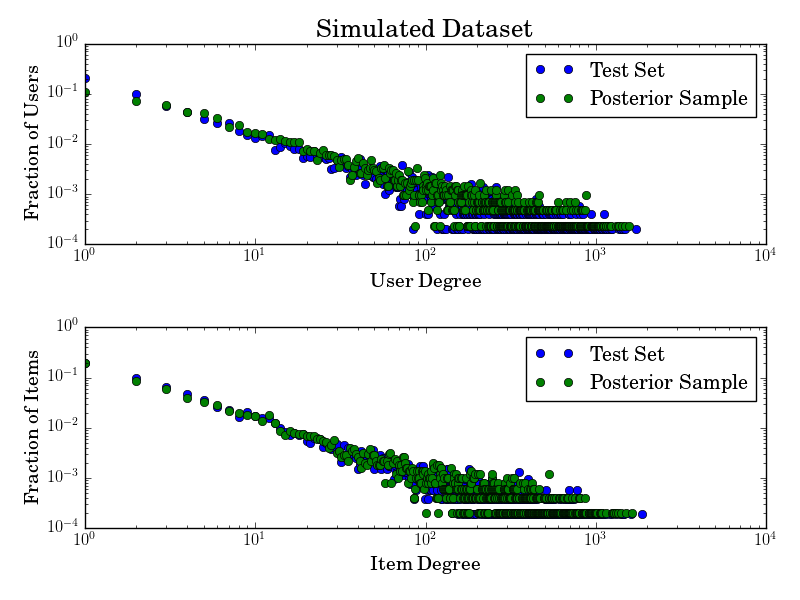}
  \includegraphics[width=0.5\linewidth]{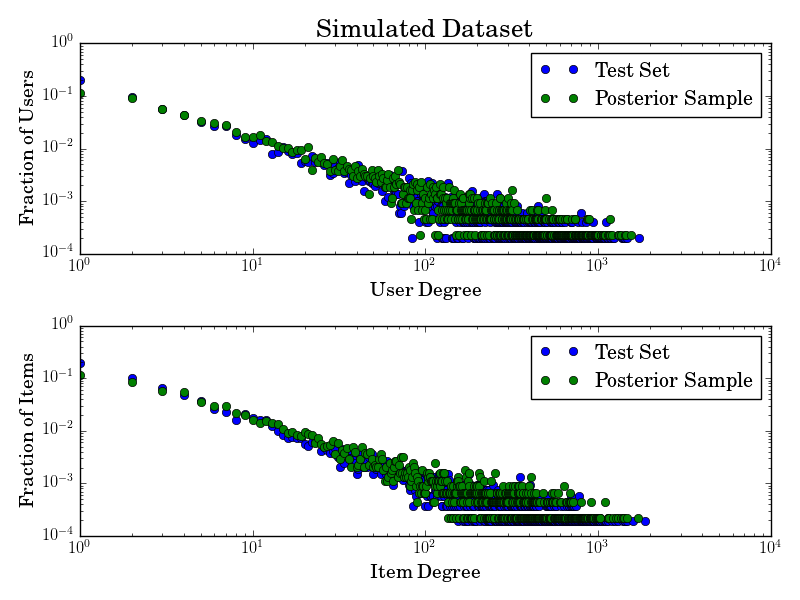}}
\\
  \subfloat[Wikipedia Dataset: On left is the posterior draw from sparse model and on right is the posterior draw from dense model]{
  \includegraphics[width=0.5\linewidth]{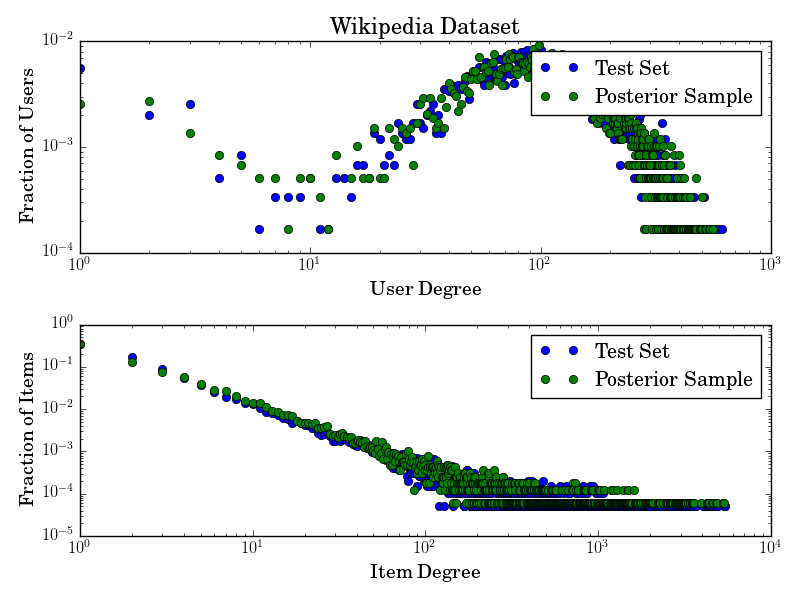}
  \includegraphics[width=0.5\linewidth]{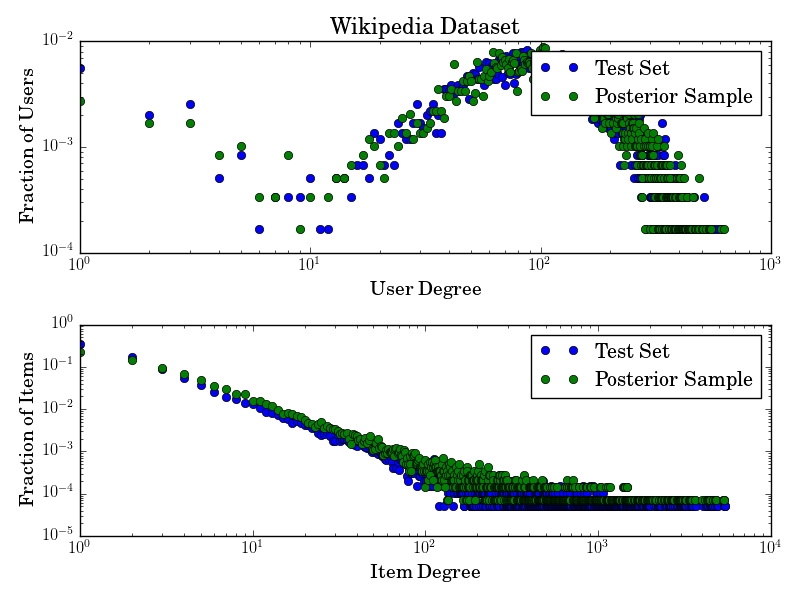}}
  
  \caption{Posterior Predictive Checks. 
    We compare samples from the approximate posterior predictive for the test data given the training data
    to the actual test data. The degree distributions are closely aligned, suggesting that
    the models do a good job predicting the structure of the test data. 
  \label{fig:ppc}}
\end{figure}

\begin{table}[!t]
  \centering
  \resizebox{\linewidth}{!}{
  \begin{tabular}{|c r|c|c|c|c|c|c|c|c|}
    \toprule
     &  & Simulated Data & Netflix  & Echonest & Wikipedia \\
    \midrule
          & Test Set:     & $5,029$ & $94,083$ & $199,655$ & $5,914$ \\
    $V_U$ & Sparse Model: & $4,319$ & $92,560$ & $189,971$ & $5,895$ \\
          & Dense Model:  & $4,230$ & $92,682$ & $190,617$ & $5,886$ \\
    \hline
          & Test Set:     & $5,285$ & $3,629$ & $59,270$ & $19,558$ \\
    $V_I$ & Sparse Model: & $4,906$ & $3,812$ & $56,272$ & $15,985$ \\
          & Dense Model:  & $4,474$ & $3,647$ & $51,151$ & $14,226$ \\
    \hline
          & Test Set:     & $374,155$ & $4,356,458$ & $1,895,016$ & $812,208$ \\
    $|E|$ & Sparse Model: & $375,897$ & $4,352,697$ & $1,870,403$ & $814,144$ \\
          & Dense Model:  & $377,313$ & $4,369,108$ & $1,882,812$ & $812,580$ \\
    \hline
                                        & Test Set:     & $(0.2262, 0.2298)$ & $(0.0122, 0.0015)$ & $(0.1671, 0.2535)$ & $(0.0097, 0.3886)$ \\
    $(\hat{\sigma_U}, \hat{\sigma_I})$  & Sparse Model: & $(0.1428, 0.2034)$ & $(0.0564, 0.0387)$ & $(0.1786, 0.2638)$ & $(0.0071, 0.3366)$ \\
                                        & Dense Model:  & $(0.1397, 0.1510)$ & $(0.0551, 0.0069)$ & $(0.1744, 0.1970)$ & $(0.0069, 0.2784)$ \\
    \bottomrule
  \end{tabular}
  }
  \caption{Posterior Predictive Checks.  We compare samples from the approximate posterior 
    predictive for the test data given the training data to the actual test data. 
    The sparse models do a moderately better job predicting the structure of the test set when the dataset is sparse.
    \label{tab:ppc}
}
\end{table}

\subsubsection{Recommendation Task.}
For each user in the test set, we recommend a list of items in the test set. 
To generate recommendations for each user $i$, we rank the items according to the values of
$r^i_j = \EE[\gamma_i \sum_k \theta_{ik} \beta_{jk} \omega_j]$; 
i.e., the expected number of edges between the user and the item.
This is a computationally cheap proxy for the probability of an edge between the user and the item. 

We measure the quality of the recommendations using four measures. 
The first is the top $M=20$ recall from \cite{Gopalan:Hofman:Blei:2015}, where $\mathrm{score} = \frac{|\{\texttt{Items retrieved}\}\cap \{\texttt{Items in test}\}|}{\mathrm{min}(M,|\{\texttt{Items in test}\}|)}.$
The second is top $M=20$ recall where recommendations are made only for unpopular items; i.e. we do not recommend items with degree more than a pre-specified threshold. 
The third is normalized Discounted Cumulative Gain (nDCG), a common measure of ranking quality. 
Finally, we consider unpopular nDCG where recommendations are made only for unpopular items.

Table \ref{tab:recommendation} summarizes the results of these experiments. 
We observe no difference in recommendation performance between the dense and sparse models, even on sparse datasets.

\begin{table}[!t]
  \centering
  \resizebox{\linewidth}{!}{
  \begin{tabular}{|c r|c|c|c|c|c|c|c|c|}
    \toprule
    Experiment& & Simulated  & Netflix  & Echonest & Wikipedia \\
    \midrule

Top $20$ & Sparse Model:                 & $0.4566$ & $0.5356$ & $0.1618$ & $0.8238$ \\
          & Dense Model:                & $0.4544$ & $0.5311$ & $0.1619$ & $0.8236$ \\
          \hline
Top $20$ items, unpopular & Sparse Model: & $0.2820$ & $0.3019$ & $0.0410$ & $0.1031$ \\
          & Dense Model:                & $0.2871$ & $0.3024$ & $0.0421$ & $0.1043$ \\
          \hline
Normalized DCG & Sparse Model:          & $0.5287$ & $0.6790$ & $0.2661$ & $0.7978$ \\
          & Dense Model:                & $0.5284$ & $0.6779$ & $0.2633$ & $0.7993$ \\
          \hline
Normalized DCG, unpopular & Sparse Model:          & $0.3900$ & $0.4374$ & $0.1019$ & $0.2346$ \\
          & Dense Model:                & $0.3908$ & $0.4386$ & $0.1040$ & $0.2403$ \\
    \bottomrule
  \end{tabular}
  }
  \caption{Scores on the recommendation task according to various measures. 
Unpopular rows report scoring on recommendations excluding the 5\% highest degree items.  
There is no discernible difference in performance between 
    the dense and sparse models.}
  \label{tab:recommendation}
\end{table}

\section{Discussion}

The main insights from this paper are: First, sparsity has a signature that can be easily recognized in fixed size datasets, 
and sparse behaviour occurs in real-world data. 
Second, the (common) assumption that the data is generated 
according to an exchangeable model implies a unique correct scheme for splitting the data into test and train sets, 
and this splitting procedure can be adapted as a component of practical model evaluation procedures. 
Finally, it is possible to scale inference for sparse exchangeable models to very large datasets.

An intriguing question raised by this paper is why modelling sparsity does not seem to help with recommendation performance,
even in cases where the dataset is clearly sparse.
One possible explanation is that Poisson matrix factorization is particularly robust against the sparsity misspecification; see \cite{Zhou:2017}
for a discussion of this point. In this case, we would expect to see a performance difference in more powerful models.
Another possible explanation is that accounting for sparsity gives more modelling power in a way that is generally not relevant for recommendation.

An obvious direction for future work is to establish a wider range of practical sparse exchangeable models. 
The tools developed in this paper will be generally useful for this enterprise,
particularly for sparse graph models built on the Generalized Gamma Process.

\bibliographystyle{ACM-Reference-Format}
\bibliography{random_graphs,matrix_factorization}

\medskip

\appendix

\section{Complete conditional for leftover mass}
\label{sec:leftover_cond}
Here we sketch the computation for the complete conditional mean and variance of 
the total leftover item masses $(\urm^{\usize}_1, \dots, \urm^{\usize}_K)$.

We may view $\{(\gamma_i, \theta_{i1}, \dots, \theta_{iK})\}$ as a Poisson process
on $\NNReals \times \NNReals^K$ with mean measure
\begin{equation*}
\usize g_{\sigma_U,\tau_U}(\intd \gamma) F(\intd\theta_1, \dots, \intd \theta_K), 
\end{equation*}
where $F(\intd\theta_1, \dots, \intd \theta_K) = \prod_k \gammaDist(\theta_k; a,b)$.
This follows by projecting the GGP on $\NNReals \times [0,\usize)$ onto the feature space,
and viewing $\{(\gamma_i, \theta_{i1}, \dots, \theta_{iK})\}$ as a marking.

Conditional on $\zeta$ and $\beta$, each atom $(\gamma_i, \theta_{i1}, \dots, \theta_{iK})$ of this process connects to $0$ edges
independently with probability 
\begin{equation*}
\exp(-\gamma_i \sum_k \theta_{ik} \sum_j \omega_j \beta_{jk}). 
\end{equation*}
Thus, ``connects to no edges'' may be viewed as a marking of the point process. 
It then follows that the random set of atoms that fail to connect
to any edges is a point process $\{(\gamma^*_i, \theta^*_{i1}, \dots, \theta^*_{iK})\}$ with mean measure
\begin{equation*}
\usize g_{\sigma_U,\tau_U}(\intd \gamma^*) F(\intd\theta^*_1, \dots, \intd \theta^*_K) \exp(-\gamma^* \sum_k \theta^*_{k} \sum_j \omega_j \beta_{jk}).
\end{equation*}
Notice that $\urm^{\usize}_k = \sum_i \gamma^*_i \theta^*_{ik}$ for each $k$. The claimed result then follows by Campbell's theorem and some algebraic manipulation.

\end{document}

